
\documentclass[letterpaper]{article}


\usepackage{aaai}
\usepackage{times}
\usepackage{helvet}
\usepackage{courier}
\setlength{\pdfpagewidth}{8.5in}
\setlength{\pdfpageheight}{11in}

\usepackage{graphicx}
\usepackage{amsmath}
\usepackage{amssymb}
\usepackage{amsthm}
\usepackage{mathrsfs}
\usepackage{booktabs}


\newcommand{\field}[1]{\mathbb{#1}}

\newcommand{\set}[1]{\mathcal{#1}}

\newcommand{\reals}{\field{R}}

\newcommand{\comment}[1]{}

\newcommand{\prob}{\mathsf{P}}      
\newcommand{\expect}{\mathsf{E}}               
\newcommand{\simplex}[1]{\Delta(\set{X})}


\theoremstyle{plain}
\newtheorem{theorem}{Theorem}
\newtheorem{lemma}[theorem]{Lemma}

\newtheorem{corollary}[theorem]{Corollary}

\theoremstyle{definition}


\pdfinfo{
/Title (An Adversarial Interpretation of
Information-Theoretic Bounded Rationality)
/Author (Pedro A. Ortega, Daniel D. Lee)
/Keywords (stochastic policies, game-theory, Nash equilibrium, bounded
rationality, KL-control, Thompson sampling)
}

\title{An Adversarial Interpretation of
Information-Theoretic Bounded Rationality}
\author{Pedro A. Ortega \and Daniel D. Lee
\\School of Engineering and Applied Sciences
\\University of Pennsylvania
\\Philadelphia, PA 19104, USA
\\ \{ope,ddlee\}@seas.upenn.edu}

\setcounter{secnumdepth}{1}


\begin{document} 

\frenchspacing

\maketitle

\begin{abstract}
\begin{quote}
Recently, there has been a growing interest in modeling planning with
information constraints. Accordingly, an agent maximizes a regularized
expected utility known as the free energy, where the regularizer
is given by the information divergence from a prior to a posterior policy. While
this approach can be justified in various ways, including from
statistical mechanics and information theory, it is still unclear how it relates
to decision-making against adversarial environments. This connection
has previously been
suggested in work relating the free energy to risk-sensitive control
and to extensive form games. Here, we show that a single-agent free
energy optimization is equivalent to a game between the agent and an imaginary
adversary.  The adversary can, by paying an exponential
penalty, generate costs that diminish the decision maker's payoffs. It turns out
that the optimal strategy of the adversary consists in choosing costs so as to
render the decision maker indifferent among its choices, which is a definining
property of a Nash equilibrium, thus tightening the connection between free
energy optimization and game theory.
\end{quote}
\end{abstract}

\emph{Keywords: bounded rationality, free energy, game theory, Legendre-Fenchel
transform.}

\section{Introduction}\label{sec:introduction}

Recently, there has been a renewed interest in \emph{bounded
rationality}, originally proposed by Herbert A. Simon as an alternative
account to the model of perfect rationality in the face of
complex decisions \cite{Simon1972}. In artificial intelligence (AI), this
development has been largely motivated by the intractability of exact
planning in complex and uncertain environments
\cite{PapadimitriouTsitsiklis1987} and the difficulty of finding domain-specific
simplifications or approximations that would render planning tractable
\comment{\cite{Ortega2011}} despite the latest theoretical advancements in
understanding
the AI problem \cite{Hutter2004}. Newly proposed techniques based on Monte Carlo
simulations such as Monte Carlo Tree Search \cite{Coulom2006,Browne2012},
Thompson sampling \cite{Strens2000} and Free Energy
\cite{Kappen2005b} have become the focus of much AI and reinforcement
learning research due to their wide applicability and their unprecedented
success in difficult problems such as computer Go \cite{Gelly2006}, universal
planning \cite{Veness2011}, human-level video game playing \cite{Mnih2013} and
robot learning \cite{Theodorou2010}. Roughly, these efforts can be broadly
classified into two groups: (a) randomized approximations to the perfect
rationality model and (b) exact statistical-mechanical or information-theoretic
(IT) approaches. The latter group is of special theoretical interest, as it
rests on a model of bounded rationality\footnote{There are several
approaches to bounded rationality in the literature, most notably those coming
from the field of behavioral economics \cite{Rubinstein1998}, which
put emphasis on the procedural elements of decision making. Here we have settled
on the qualifier ``information-theoretic'' as a way to distinguish from these
approaches.} that yields randomized optimal policies as a consequence of
resource-boundedness \cite{OrtegaBraun2013}.

The difference between the perfectly rational and
the IT bounded rational model lies in the choice of the
objective function: the IT approach adds a regularization
term in the form of a Kullback-Leibler divergence (KL-divergence) to the
expected utility of the perfectly rational model. The resulting objective
function goes under various names in the literature, such as \emph{KL-control
cost} and \emph{free energy}, and has been motivated in numerous ways. The
initial inspiration comes from the maximum entropy principle of
statistical mechanics as a way to model stochastic control problems that are
linearly-solvable \cite{Fleming1977,Kappen2005a,Todorov2006}. The methods from
robust control were also adopted by economists to characterize model
misspecification \cite{Hansen2008}. Later, it was shown that the free
energy can be derived from axioms that treat utilities and information as
commensurable quantities \cite{OrtegaBraun2013}.
Then in robot reinforcement learning, researchers proposed the free
energy as a way to control the information-loss resulting from policy updates
\cite{Peters2010}. Finally, the free energy has also been motivated from
arguments that parallel rate-distortion theory and information geometry by
showing that the optimal policy minimizes the decision complexity subject to a
constraint on the expected utility \cite{Tishby2011}. It is also worth
mentioning that similar approaches arise in the context of neuroscience
\cite{Friston2009} and in game theory
\cite{Wolpert2004}.

An important yet intriguing property of the free energy is that it is
a universal aggregator of value. More precisely, it can implement the minimum,
expectation, maximization, and all the ``soft'' aggregations in between by
varying a single parameter. This has at least two important consequences. First,
the free energy corresponds, in economic jargon, to the
\emph{certainty-equivalent}, i.e.\ the value a risk-sensitive decision-maker is
thought to assign to an uncertain choice \cite{Broek2010}.
Second, it has been pointed out that decision trees
are nested certainty-equivalent operations \cite{OrtegaBraun2012}. This is
important, as decision making in the face of an adversarial, an indifferent, and
a cooperative environment, which have previously been treated as unrelated
modeling assumptions, are actually instantiations of a single decision rule.
Previous work explores this property in multi-agent settings, showing that
solutions change from being risk-dominant to payoff-dominant \cite{Kappen2012}.

\subsection{Aim of this Work}
Despite the identification of the free energy with the certainty-equivalent,
the connection to adversarial environments is as yet not fully understood. Our
work makes an important step into understanding this connection. By applying a
Legendre-Fenchel transformation to the regularization term of the free energy,
it is revealed that a single-agent free energy optimization can be thought of as
representing a game between the agent and an imaginary adversary. Furthermore,
this result explains stochastic policies as a strategy that guards the agent
from adversarial reactions of the environment.

\subsection{Structure of this Article}
This article is organized into four sections. The
aim of the next section (Section~\ref{sec:preliminaries}) is to familiarize the
reader with the mathematical foundations underlying the planning problem in AI,
and to give a basic introduction into IT bounded rationality necessary in order
to contextualize the results of our work. Section~\ref{sec:adversarial}
contains our central contribution. It first briefly reviews
Legendre-Fenchel transformations and then applies them to the free energy
functional in order to unveil the adversarial assumptions implicit in the
objective function. Finally, Section~\ref{sec:discussion} discusses the results
and concludes.

\section{Preliminaries}\label{sec:preliminaries}

We review the abstract foundations of planning under expected utility theory
and IT bounded rationality. 

\subsection{Variational Principles}

The behavior of an agent is typically characterized in one of two ways: (a) by
directly describing its policy, or (b) by specifying a \emph{variational
problem} that has the policy as its solution. While the former is a
direct specification of the agent's actions under any contingency, the latter
has the advantage that it provides an explicit (typically convex) objective
function that the policy has to optimize.
Thus, a variational principle has additional explanatory power: not only does it
single out an optimal policy, but it also encodes a \emph{preference relation}
over the set of feasible policies. Crucially, the qualifier ``optimal'' only
holds relative to the objective function and is by no means absolute: because
given \emph{any policy}, one can always engineer a variational principle that is
extremized by it. In AI, virtually all planning algorithms are framed (either
explicitly or implicitly) as \emph{maximum expected utility} problems.
This encompasses popular problem classes such as multi-armed bandits, Markov
decision processes and partially observable Markov decision processes
\cite[Chapter~3]{Legg2008}.

\subsection{Sequential versus Single-Step Decisions}

We briefly recall a basic theoretical result that will simplify our
analysis. In planning, \emph{sequential
decision problems} can be rephrased as \emph{single-step decision
problems}: instead of letting the agent choose an action in each turn, one can
equivalently let the agent choose a single \emph{policy} in the beginning that
it then must follow during its interactions with
the environment\footnote{Note that this reduction encompasses policies that
appear to change during its execution as a function of the history: the
meta-policy controlling the changes is a compressed, yet sufficient description
of the changing policy.}. This
observation was first made in game theory, where it was proven that every
\emph{extensive form game} can always be re-expressed as a \emph{normal form
game} \cite{Neumann1944}. Consequently, we abstract away from the sequential
nature of the general problem by limiting our discussion to single-step
decisions involving a single action and observation. The price we pay is to hide
the dynamical structure and to increase the complexity of the policy, but the
mathematical results can stated more concisely.

\subsection{(Subjective) Expected Utility}\label{sec:expected_utility}

Expected utility \cite{Neumann1944,Savage1954} is the \emph{de facto}
standard variational principle in artificial intelligence
\cite{RussellNorvig2010}. It is based on a \emph{utility function},
that is, a real-valued mapping of the outcomes  encoding the desirability of
each possible realization of the interactions between the agent and the
environment. The qualifiers ``subjective'' and ``expected'' in its name derive
from the fact that the desirability of a stochastic realization is calculated
as the expectation over the utilities of the particular realizations measured
with respect to the subjective beliefs of the agent.

Formally, let $\set{X}$ and $\set{Y}$ be two finite sets, the former
corresponding to the \emph{set of actions} and the latter to the \emph{set of
observations}. A \emph{realization} is a pair $(x,y) \in \set{X} \times
\set{Y}$. Furthermore, let $U: \set{X} \times \set{Y} \rightarrow \reals$ be a
utility function, such that $U(x,y)$ represents the desirability of the
realization $(x,y) \in \set{X}\times\set{Y}$; and let $q(\cdot|\cdot)$ be a
conditional probability distribution characterizing the environmental response
where $q(y|x)$ represents the probability of the observation $y \in \set{Y}$
given the action $x \in \set{X}$. Then, the agent's optimal policy $p \in
\Delta(\set{X})$ is chosen so as to maximize the functional
\begin{align}
  \expect[U]
  &= \sum_x p(x) \expect[U|x]
  \nonumber \\
  &= \sum_x p(x) \sum_y q(y|x) U(x,y).
  \label{eq:expected_utility}
\end{align}
This is illustrated in Figure~\ref{fig:eu-simplex}. An \emph{optimal policy} is
any distribution $p^\ast$ with no support over suboptimal actions,
that is, $p^\ast(x) = 0$ whenever $\expect[U|x] \leq \max_z \expect[U|z]$. In
particular, because the expected utility is linear in the policy
probabilities, one can always choose a solution that is a vertex of the
probability simplex $\Delta(\set{X})$:
\begin{equation}\label{eq:optimal_expected_utility}
  p^\ast(x)
  = \delta_{x^\ast}^x
  = \begin{cases}
    1 & \text{if $x = x^\ast := \arg\max_x \expect[U|x]$}\\
    0 & \text{otherwise,}
    \end{cases}
\end{equation}
where $\delta$ is the Kronecker delta. Hence, there always exists a
deterministic optimal policy.

\begin{figure}
\centering
\includegraphics[width=\columnwidth]{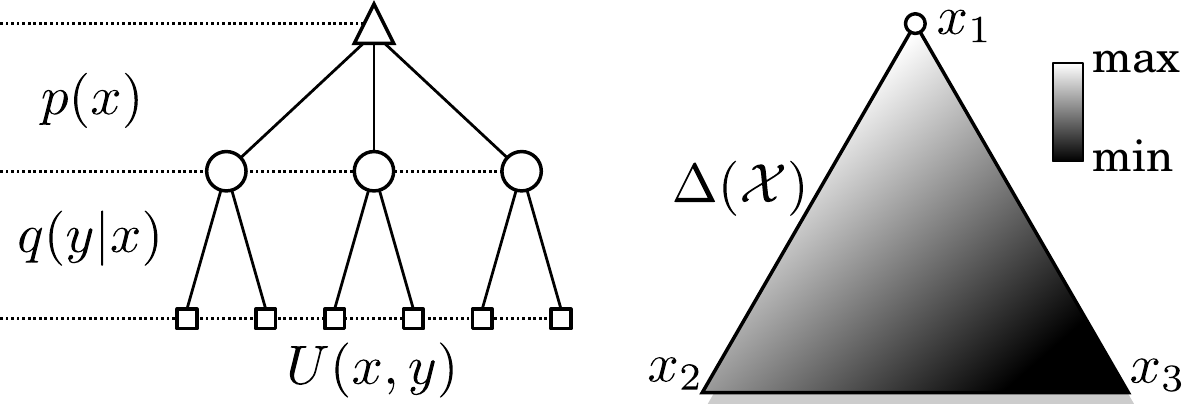}
\caption{Expected Utility Theory. The decision problem (left) boils down to
choosing a policy---a member of the simplex over actions $x \in
\mathcal{X}$---maximizing the convex combination over conditional expected
utilities $\expect[U|x]$ (right).}\label{fig:eu-simplex}
\end{figure}

Once the optimal policy has been chosen, the utility $U$ becomes a well-defined
random variable with probability distribution
\begin{align}
  \prob(U=u)
  &= \prob\bigl\{ (x,y) : U(x,y) = u \bigr\} \nonumber \\
  &= \sum_{U^{-1}(u)} p^\ast(x) q(y|x). \label{eq:utility_distribution}
\end{align}
Even though the optimal policy might be deterministic, the utility
of the ensuing realization is in general stochastic.

Let us now consider the case when the environment is another agent. Formally,
this means that the agent lacks the conditional probability distribution
$q(\cdot|\cdot)$ over observations that it requires in order the evaluate the
expected utilities. Instead, the agent possesses a second utility function $V:
\set{X} \times \set{Y} \rightarrow \reals$ characterizing the desires of the
environment. Game theory then invokes a \emph{solution concept}, most notably
the \emph{Nash equilibrium}, in order to obtain the missing distribution
$q(\cdot|\cdot)$ from $U$ and $V$ that renders the original description as a
well-defined decision problem. Thus, conceptually, game theory starts from
slightly weaker assumptions than expected utility theory. For simplicity, we
restrict ourselves to two particular cases: (a) the fully adversarial case
$U(x,y) = -V(x,y)$; and (b) the fully cooperative case $U(x,y) = V(x,y)$. The
Nash equilibrium then yields the decision rules \cite{Osborne1999}:
\begin{align}
  p^\ast &= \arg\max_p \sum_x p(x) \biggl\{ \min_q q(y|x) U(x,y) \biggr\},
  \label{eq:minmax} \\
  p^\ast &= \arg\max_p \sum_x p(x) \biggl\{ \max_q q(y|x) U(x,y) \biggr\}
  \label{eq:maxmax}
\end{align}
for the two cases respectively. Comparing these to~\eqref{eq:expected_utility}
we immediately see that, for planning purposes, it is irrelevant whether the
decision is made by the agent or not---what matters is the degree to which an
outcome (be it an action or an observation) contributes to the agent's
objective function (as encoded by one of the three possible aggregation
operators). Thus, equations~\eqref{eq:expected_utility}, \eqref{eq:minmax} and
\eqref{eq:maxmax} can be regarded as consisting of not one, but \emph{two}
nested ``decision steps'' of the form
\[
  \max_x \{ U(x) \}, \quad
  \exp_p [ U(X) ] \quad \text{or} \quad
  \min_x \{ U(x) \}.
\]
We end this brief review by summarizing the main properties of expected utility:
\begin{enumerate}
  \item \emph{Linearity:} The objective function is linear in the policy.
  \item \emph{Existence of Deterministic Solution:} Although there are optimal
  stochastic policies, there always exists an equivalent deterministic
  optimal policy.
  \item \emph{Indifference to Higher-Order Moments:} Expected utility
  places constraints on the first, but not on the higher-moments of the
  probability distribution of the utility.
  \item \emph{Exhaustive Search:} Finding the optimal deterministic policy
  requires, in the worst-case, an exhaustive evaluation of all the expected
  utilities.
\end{enumerate}
It is important to note that these properties only apply to
expected utility, but not to game theory.

\subsection{Free Energy}

How does an agent make decisions when it cannot exhaustively evaluate all
the alternatives? IT bounded rationality addresses this question by defining
a new objective function that trades off utilities versus information
costs. Planning is conceptualized as a process that transforms a prior policy
into a posterior policy that incurs a cost measured in \emph{utiles}.
Importantly, it is assumed that the agent is not allowed itself
to
reason about the costs of this transformation\footnote{The inability to reason
about resource costs renders IT bounded rationality fundamentally
different from the meta-reasoning approach of Stuart J. Russell
\cite{Russell1995a}.}; rather, it tries to optimize the original expected
utility but then runs out of resources. From the point of view of an external
observer, this ``interrupted'' planning process will appear as if it were
explicitly optimizing the free energy.

Formally, let $\set{X}$ be a finite set corresponding to the \emph{set of
realizations} and $U: \set{X} \rightarrow \reals$ is the utility function.
Furthermore, let $p_0 \in \Delta(\set{X})$ be a prior policy and $\beta \in
\reals$ be a boundedness parameter. Then, the agent's optimal policy $p \in
\Delta(\set{X})$ is chosen so as to extremize the free energy functional
\begin{equation}\label{eq:free-energy}
  F_\beta[p] :=
  \underbrace{ \sum_x p(x) U(x) }_\text{Expected Utility}
  - \underbrace{ \frac{1}{\beta} \sum_x p(x)
    \log \frac{ p(x) }{ p_0(x) } }_\text{Information Costs},
\end{equation}
where it is seen that $\beta$ controls the conversion from units of information
to utiles (Figure~\ref{fig:free-energy-simplex}). The optimal policy is given by
the ``softmax''-like distribution
\begin{equation}\label{eq:equilibrium}
  p^\ast(x)
  = \frac{ p_0(x) e^{\beta U(x)} }
    { \sum_x p_0(x) e^{\beta U(x)} }
\end{equation}
which is known as the \emph{equilibrium distribution}. Notice that $\beta$
controls how much the agent is in control of the choice: a value $\beta \approx
0$ falls back to the prior policy and represents lack of influence; and values
far away from zero yield either adversarial or cooperative choices depending on
the sign of~$\beta$. An important property of the equilibrium distribution is
that it can be simulated exactly \emph{without} evaluating all the utilities
using Monte Carlo methods \cite{Ortega2014a}.

\begin{figure}
\centering
\includegraphics[width=\columnwidth]{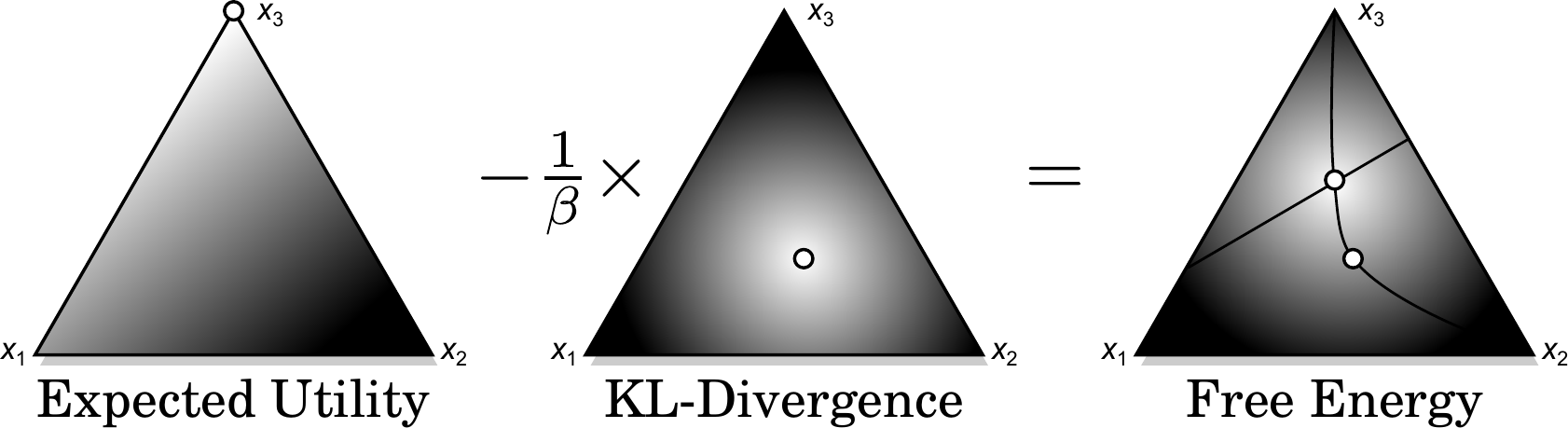}
\caption{Free energy consists of the expected utility regularized by the
KL-divergence between a given prior and posterior policy. The resulting
objective function is convex and has an optimum $p^\ast$ that is in general in
the interior of the probability simplex
$\Delta(\set{X})$.}\label{fig:free-energy-simplex}
\end{figure}

As we have mentioned before, the free energy~\eqref{eq:free-energy}
corresponds to the certainty-equivalent. This is seen as follows: the extremum
of~\eqref{eq:free-energy} given by
\begin{equation}\label{eq:extremum}
  \frac{1}{\beta} \log Z_\beta,
  \qquad \text{where} \quad
  Z_\beta := \sum_x p_0(x) e^{\beta U(x)},
\end{equation}
but now seen \emph{as a function of $\beta$}, can be thought of as an
interpolation of the maximum, expectation and minimum operator, since
\begin{align*}
  \tfrac{1}{\beta} \log Z_\beta &= \max_x \{ U(x) \}
    & \beta &\rightarrow +\infty \\
  \tfrac{1}{\beta} \log Z_\beta &= \quad \mathbf{E}_{p_0}[ U ]
    & \beta &\rightarrow \phantom{+}0 \\
  \tfrac{1}{\beta} \log Z_\beta &= \min_x \{ U(x) \}
    & \beta &\rightarrow -\infty.
\end{align*}
We end this brief review by summarizing the main properties of IT bounded
rationality:
\begin{enumerate}
  \item \emph{Nonlinearity:} The objective function is nonlinear in the policy.
  \item \emph{Stochastic Solutions:} Optimal policies are inherently
  stochastic.
  \item \emph{Sensitive to Higher-Order Moments:} Because the free energy is
  nonlinear in the policy, it places constraints on the higher-order moments
  of the policy too. In particular, a Taylor expansion of the KL-divergence term
  reveals that the free energy is sensitive to \emph{all} the moments of the
  policy.
  \item \emph{Randomized Search:} Obtaining a decision from the optimal policy
  does not require the exhaustive evaluation of the utilities. Rather, it can
  be sampled using Monte Carlo techniques.
\end{enumerate}

\begin{figure}
\centering
\includegraphics[width=0.9\columnwidth]{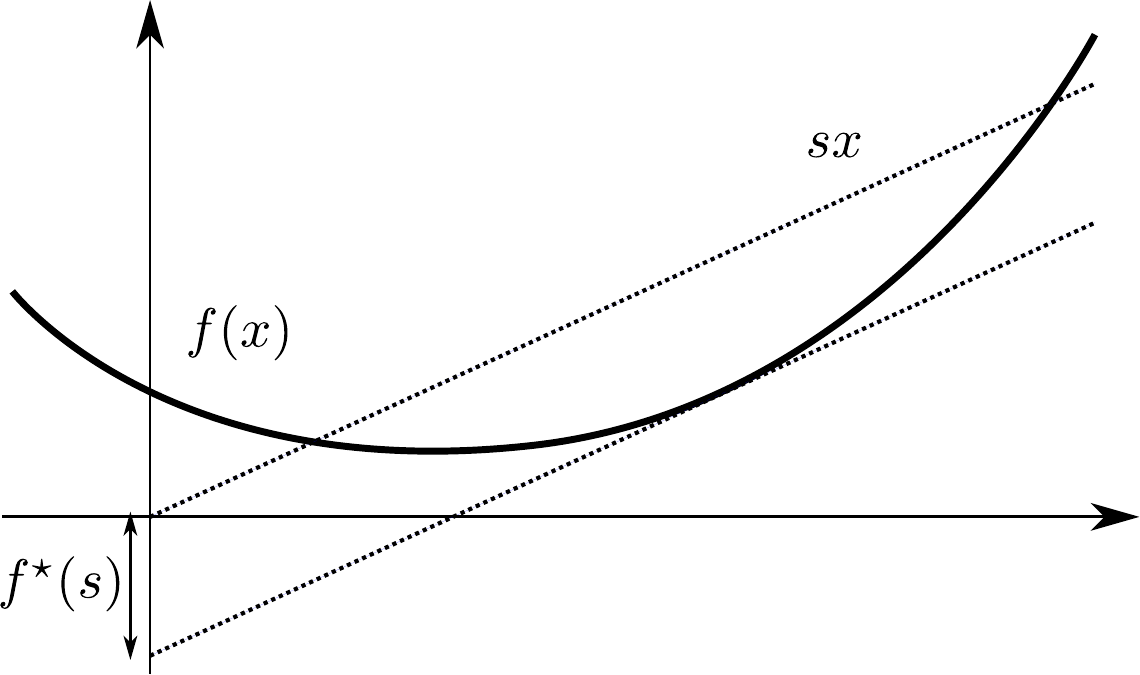}
\caption{The Legendre-Fenchel Transform. Given a function
$f(x)$, the convex conjugate $f^\star(s)$ corresponds to the intercept of a
tangent line to the curve with slope $s$.}\label{fig:legendre}
\end{figure}

\section{Adversarial Interpretation}\label{sec:adversarial}

Before presenting our main results, we give a brief definition of
Legendre-Fenchel transforms.

\subsection{Legendre-Fenchel Transforms}

The \emph{Legendre-Fenchel transformation} is a
transformation that yields an alternative encoding of a given functional
relationship. More precisely, the information contained in the function is
expressed using the derivative as the independent variable. Because of this,
Legendre-Fenchel transformations play an important role in thermodynamics and
optimization. For a function $f(x) : \reals^n \rightarrow \reals$, it is defined
by the variational formula
\[
  f^\star(s) = \sup_{x \in \set{X}} \{ \langle s, x \rangle - f(x) \}
\]
where $f^\star(s): \reals^n \rightarrow \reals$ is the \emph{convex
conjugate} and $\langle s, x\rangle$ is the inner product of two vectors $s,x$
in $\reals^n$. The following are common examples:
\begin{enumerate}
\item \emph{Affine function:}
\[
   f(x) = ax - b
   \qquad
   f^\star(s) =
   \begin{cases}
   b & \text{if $s = a$,} \\
   +\infty & \text{if $s \neq a$.}
   \end{cases}
\]
\item \emph{Power function:}
\[
  f(x) = \frac{1}{\alpha} |x|^\alpha
  \qquad
  f^\star(s) = \frac{1}{\alpha'} |s|^{\alpha'}
\]
where $\frac{1}{\alpha} + \frac{1}{\alpha'} = 1$.
\item \emph{Exponential function:}
\[
  f(x) = e^x
  \qquad
  f^\star(s) =
  \begin{cases}
  s \log s - s & \text{if $s > 0$,} \\
  0 & \text{if $s = 0$,} \\
  +\infty & \text{if $s < 0$.}
  \end{cases}
\]
\end{enumerate}
We refer the reader to Touchette's article \cite{Touchette2005} for a brief
introduction or Boyd and Vanderberghe's book \cite{Boyd2004} for a thorough
treatment.

\subsection{Unveiling the Adversarial Environment}
Using a Legendre-Fenchel transformation, one can show that the regularization
term of the free energy can be rewritten as a variational problem.
\begin{lemma}\label{lemma:kl}
\begin{multline}
  -\frac{1}{\beta} \sum_x p(x) \log \frac{ p(x) }{ p_0(x) }
  \\= \min_{C} \sum_x -p(x) C(x) + p_0(x) e^{\beta C(x)}-\frac{1}{\beta}(\log
\beta+1)
\end{multline}
\end{lemma}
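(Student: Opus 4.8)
The plan is to evaluate the right-hand side directly, exploiting the fact that the minimization decouples across realizations. Since the cost function $C$ enters the sum $\sum_x [-p(x)C(x) + p_0(x) e^{\beta C(x)}]$ only through its value $C(x)$ at each $x$ independently, the joint minimization over $C$ reduces to a collection of separate scalar minimizations of $h_x(c) := p_0(x) e^{\beta c} - p(x) c$, one for each $x$. First I would verify that each $h_x$ is strictly convex: its second derivative is $\beta^2 p_0(x) e^{\beta c} > 0$ whenever $p_0(x) > 0$ and $\beta \neq 0$, so any stationary point is the unique global minimizer and the stated $\min_C$ is genuinely attained.

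Next I would solve the first-order condition $\beta p_0(x) e^{\beta c} - p(x) = 0$, which yields the minimizer $C^\ast(x) = \tfrac{1}{\beta}\log\frac{p(x)}{\beta p_0(x)}$. Substituting $C^\ast(x)$ back into $h_x$ gives the per-element minimum $-\tfrac{1}{\beta}p(x)\log\frac{p(x)}{\beta p_0(x)} + \tfrac{1}{\beta}p(x)$, where the exponential term collapses neatly to $p(x)/\beta$. Equivalently, one may recognize this step as a Legendre--Fenchel conjugation: after the substitution $u = \beta c$ and a shift absorbing the prefactor $p_0(x)$, the scalar problem becomes (minus) the convex conjugate of the exponential $e^u$, which the preliminaries record as $s\log s - s$ for $s>0$; this is the route the section heading advertises, and either derivation lands on the same value.

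Finally I would sum the per-element minima over $x$, split the logarithm as $\log\frac{p(x)}{\beta p_0(x)} = \log\frac{p(x)}{p_0(x)} - \log\beta$, and use $\sum_x p(x) = 1$ to collect the leftover constants. The terms proportional to $\tfrac{1}{\beta}\log\beta$ and $\tfrac{1}{\beta}$ that emerge assemble into exactly $\tfrac{1}{\beta}(\log\beta + 1)$, which is precisely cancelled by the additive constant $-\tfrac{1}{\beta}(\log\beta + 1)$ on the right-hand side, leaving only $-\tfrac{1}{\beta}\sum_x p(x)\log\frac{p(x)}{p_0(x)}$, as claimed.

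The main obstacle is bookkeeping rather than conceptual: one must track the $1/\beta$ factor introduced by the change of variables and confirm that the constant offset is tuned to annihilate every term except the Kullback--Leibler expression. The one genuine hypothesis to flag is the sign of $\beta$: the stationarity equation $e^{\beta c} = p(x)/(\beta p_0(x))$ is solvable, and the objective is bounded below, only when $\beta > 0$ (consistent with the appearance of $\log\beta$), so the identity should be understood in that regime.
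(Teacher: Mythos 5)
Your proof is correct and follows essentially the same route as the paper's: both solve the stationarity condition $p(x) = \beta p_0(x) e^{\beta C(x)}$ for the minimizing costs $C^\ast(x) = \tfrac{1}{\beta}\log\tfrac{p(x)}{\beta p_0(x)}$ and substitute back, your coordinate-wise decomposition being the paper's vector-valued Legendre--Fenchel computation written out componentwise. If anything, your write-up is more careful than the paper's, which omits the convexity check guaranteeing the minimum is attained and does not flag that the identity (and the boundedness of the objective) requires $\beta > 0$.
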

\begin{proof}
Since $n = |\set{X}|$ is finite, $p(x), C(x)$ are vectors in~$\reals^n$.
Essentially, the lemma says that the l.h.s. is the convex conjugate of
the function
\[
    f(C) = -\sum_x p_0(x) e^{\beta C(x)} + \frac{1}{\beta}(\log \beta + 1),
\]
because the r.h.s. can be re-expressed as
\[
  \min_C \sum_x -p(x)C(x) + f(C) = \max_C \sum_x p(x)C(x) - f(C).
\]
Setting the derivative w.r.t. $C$ to zero yields the optimality condition
\[
  p(x) = \beta p_0(x) e^{\beta C(x)}.
\]
Isolating $C(x)$ and substituting into $\sum_x p(x)C(x) - f(C)$ proves the
lemma.
\end{proof}

Consequently, the regularization term of the free energy encapsulates a
second variational problem over an auxiliary vector $C$. In particular,
the logarithmic form encodes an objective function that is exponential in
$C$. Equipped with this lemma, we can state our main result.

\begin{theorem}
The maximization of the free energy~\eqref{eq:free-energy} is
equivalent to the maximization of
\begin{equation}\label{eq:adversarial}
  \min_C
  \underbrace{ \sum_x p(x) [U(x) - C(x)] }_\text{Expected Net Utility}
  + \underbrace{ \sum_x p_0(x) e^{\beta C(x)} }_\text{Penalty of Adversary}
\end{equation}
w.r.t. the policy $p \in \Delta(\set{X})$.
\end{theorem}
\begin{proof}
The proof is an immediate consequence of Lemma~\ref{lemma:kl}:
\begin{small}
\begin{align*}
  &\arg\max_p \biggl\{ \sum_x p(x)U(x)
  - \frac{1}{\beta} \sum_x p(x) \log \frac{p(x)}{p_0(x)} \biggr\}
  \\
  &= \arg\max_p \biggl\{ \sum_x p(x)U(x)
  + \min_{C} \biggl\{ \sum_x -p(x) C(x)
  \\&\qquad + p_0(x) e^{\beta C(x)}
    - \frac{1}{\beta}(\log \beta+1)\biggr\} \biggr\}
  \\
  &= \arg\max_p\min_C\biggl\{ \sum_x p(x)[U(x) - C(x)]
  + \sum_x p_0(x) e^{\beta C(x)}\biggr\}.
\end{align*}
\end{small}
\end{proof}

This result can be interpreted as follows. When a single agent maximizes the
free energy, it is implicitly assuming a situation where it is playing against
an imaginary adversary. In this situation, the agent first chooses
its policy $p \in \Delta(\set{X})$, and then the adversary
attempts to decrease the agent's net utilities by subtracting
costs $C(x)$. However, the adversary cannot choose these costs arbitrarily;
instead, it must pay exponential penalties. In fact, its optimal strategy can be
directly calculated from Lemma~\ref{lemma:kl}.
\begin{corollary}
The imaginary adversary's optimal strategy is given by
\begin{equation}\label{eq:optimal-cost}
  C^\ast(x) = \frac{1}{\beta} \log \frac{p(x)}{\beta p_0(x)}.
\end{equation}
\end{corollary}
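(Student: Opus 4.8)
The plan is to read the adversary's optimum directly off the first-order stationarity condition already established in the proof of Lemma~\ref{lemma:kl}. Recall that in the adversarial formulation~\eqref{eq:adversarial}, for a fixed policy $p$ the imaginary adversary solves the inner minimization
\[
  \min_C \sum_x \bigl\{ -p(x)C(x) + p_0(x) e^{\beta C(x)} \bigr\},
\]
where the term $\sum_x p(x)U(x)$ has been dropped because it does not depend on $C$. First I would exploit the fact that this objective is separable across the coordinates $C(x)$ and differentiate each summand on its own, obtaining $-p(x) + \beta p_0(x) e^{\beta C(x)} = 0$. This is precisely the optimality condition $p(x) = \beta p_0(x) e^{\beta C(x)}$ that Lemma~\ref{lemma:kl} already derived.

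Second, I would simply invert this relation algebraically. Dividing by $\beta p_0(x)$ and taking logarithms gives $\beta C(x) = \log \frac{p(x)}{\beta p_0(x)}$, and dividing by $\beta$ yields $C^\ast(x) = \frac{1}{\beta} \log \frac{p(x)}{\beta p_0(x)}$, which is exactly~\eqref{eq:optimal-cost}. Since the corollary is a one-line inversion of a condition proved earlier, no genuine difficulty arises here.

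Finally, to confirm that the stationary point is the minimizer demanded by the $\min_C$ in~\eqref{eq:adversarial}, rather than a saddle or maximum, I would invoke convexity: each term $p_0(x) e^{\beta C(x)}$ is convex in $C(x)$ whenever $\beta \neq 0$ (as $p_0(x) > 0$ and the exponential is convex), while $-p(x)C(x)$ is affine, so the separable objective is convex in $C$ and its unique stationary point is the global minimizer. The only subtlety worth flagging is that the logarithm in~\eqref{eq:optimal-cost} be well defined, i.e.\ that $p(x)/(\beta p_0(x)) > 0$; this holds in the regime of interest and simply mirrors the implicit assumption already present in Lemma~\ref{lemma:kl}. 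The main thing to keep track of is therefore the sign of $\beta$, so that the exponential parametrization and its logarithmic inverse remain consistent.
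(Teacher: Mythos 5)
Your proof is correct and matches the paper's own route: the corollary is read off directly from the stationarity condition $p(x) = \beta p_0(x) e^{\beta C(x)}$ established in the proof of Lemma~\ref{lemma:kl}, inverted algebraically to give~\eqref{eq:optimal-cost}. Your added convexity check (and the flag that $\beta > 0$ is needed for the logarithm to be well defined) only makes explicit what the paper leaves implicit.
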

Inspecting \eqref{eq:optimal-cost}, we see that the optimal costs scale
relatively with the agent's deviation from its prior probabilities. This
leads to an interesting interaction between the agent's and the imaginary
adversary's choices.

\subsection{Indifference}

Our second main result characterizes the solution to this adversarial setup. It
turns out that the adversary's best strategy is to choose costs such that the
agent's net payoffs are \emph{uniform}.

\begin{theorem}\label{theo:indifference}
The solution to
\[
  \max_p \min_C \sum_x p(x) [U(x) - C(x)] + \sum_x p_0(x) e^{\beta C(x)}
\]
has the property that for all $x \in \set{X}$,
\[
  U(x) - C(x) = \text{constant}.
\]
\end{theorem}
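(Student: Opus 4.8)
The plan is to exploit the optimal adversary strategy already computed in the Corollary and feed it back into the agent's optimization. Since we have the closed form $C^\ast(x) = \frac{1}{\beta} \log \frac{p(x)}{\beta p_0(x)}$ for the inner minimization, the key observation is to compute the net utility $U(x) - C^\ast(x)$ and show that, once the agent picks its optimal policy $p^\ast$, this quantity is independent of $x$.

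First I would substitute the optimal cost into the net utility to obtain
\[
  U(x) - C^\ast(x) = U(x) - \frac{1}{\beta} \log \frac{p(x)}{\beta p_0(x)}.
\]
The next step is to invoke the known form of the agent's optimal policy. Because the theorem in the previous subsection establishes that maximizing the adversarial objective~\eqref{eq:adversarial} is \emph{equivalent} to maximizing the original free energy~\eqref{eq:free-energy}, the optimal policy $p^\ast$ is exactly the equilibrium distribution~\eqref{eq:equilibrium}, namely $p^\ast(x) = p_0(x) e^{\beta U(x)} / Z_\beta$ with $Z_\beta = \sum_x p_0(x) e^{\beta U(x)}$. Plugging this into the expression for $p^\ast(x)/p_0(x)$ gives $e^{\beta U(x)}/Z_\beta$, so that $\log \frac{p^\ast(x)}{\beta p_0(x)} = \beta U(x) - \log(\beta Z_\beta)$.

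Carrying out the cancellation, the net utility becomes
\[
  U(x) - C^\ast(x) = U(x) - \frac{1}{\beta}\bigl[\beta U(x) - \log(\beta Z_\beta)\bigr]
  = \frac{1}{\beta} \log(\beta Z_\beta),
\]
which manifestly does not depend on $x$. This establishes the constant claimed in the statement and identifies its value in terms of the partition function. I would note that the constant $\frac{1}{\beta}\log(\beta Z_\beta)$ sits close to the certainty-equivalent~\eqref{eq:extremum}, differing only by the $\frac{1}{\beta}\log\beta$ term inherited from the Legendre-Fenchel normalization in Lemma~\ref{lemma:kl}.

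The main obstacle I anticipate is not the algebra, which is routine cancellation, but rather justifying that the saddle point is attained at the stated $p^\ast$ and $C^\ast$ simultaneously---i.e.\ that one may legitimately substitute the inner optimizer $C^\ast$ (which depends on $p$) and then separately use the outer optimizer $p^\ast$. The clean way to handle this is to appeal directly to the equivalence already proven: since the adversarial problem has the same optimal policy as the free energy, I simply evaluate $C^\ast$ at $p = p^\ast$, and the indifference property then follows from the explicit computation above rather than from any delicate minimax interchange argument. A secondary subtlety is the sign of $\beta$ (the penalty and the $\log\beta$ term implicitly assume $\beta > 0$), which I would flag as the regime in which the expressions are well-defined.
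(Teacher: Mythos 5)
Your proof is correct, but it takes a genuinely different route from the paper's. The paper proves indifference structurally: it first swaps $\max_p$ and $\min_C$ (invoking the absence of a duality gap), then argues by contradiction that the set $\set{X}^\ast$ of maximizers of $U(x)-C(x)$ must be all of $\set{X}$ --- because if some $x$ were unsupported, the adversary would drive $C(x)\to-\infty$ there, making $U(x)-C(x)$ arbitrarily large and violating the maximality of $\set{X}^\ast$. Your proof instead nests the optimizations in the order written (inner $\min_C$ via the closed form \eqref{eq:optimal-cost}, outer $\max_p$ via the equivalence with the free energy and the known equilibrium distribution \eqref{eq:equilibrium}) and verifies indifference by direct cancellation, obtaining the explicit constant $\tfrac{1}{\beta}\log(\beta Z_\beta)$. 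Each approach buys something: yours avoids the minimax interchange entirely (which the paper justifies only loosely), pins down the value of the constant, and correctly flags the $\beta>0$ regime; the paper's argument, by contrast, does not presuppose the closed form \eqref{eq:equilibrium} of the optimal policy, and so explains \emph{why} indifference must hold at any saddle point rather than verifying it at a known one. Note also that your derivation is essentially the paper's own Discussion-section remark run in reverse --- there, Theorem~\ref{theo:indifference} plus \eqref{eq:optimal-cost} are used to recover the characterization of \eqref{eq:equilibrium}, whereas you use \eqref{eq:equilibrium} (a fact stated as known in the Preliminaries, so there is no circularity) to establish Theorem~\ref{theo:indifference}. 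One small point worth making explicit in your write-up: the formula \eqref{eq:optimal-cost} for the inner minimizer is only valid where $p(x)>0$, but since the equilibrium distribution has full support whenever $p_0$ does, evaluating at $p=p^\ast$ is legitimate.
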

\begin{proof}
We first note that we can exchange the maximum and minimum operations,
\begin{align*}
&\max_p \min_C \sum_x p(x) [U(x) - C(x)] + \sum_x p_0(x) e^{\beta C(x)}
\\&=\min_C \max_p \sum_x p(x) [U(x) - C(x)] + \sum_x p_0(x) e^{\beta C(x)}
\end{align*}
because there is no duality gap due to the concavity of the exponential
function. We can thus maximize first w.r.t. the policy $p$. Define $\set{X}^\ast
\subset \set{X}$ as the subset of elements maximizing the penalized
expected utility, that is, for all $x^\ast \in \set{X}$ and $x \in \set{X}$,
\begin{equation}\label{eq:maximal-set}
  U(x^\ast)-C(x^\ast) \geq U(x)-C(x).
\end{equation}
Maximizing w.r.t. to $p$ yields optimal probabilities $p^\ast(x)$ given by
\[
  p^\ast(x) = \begin{cases}
    q(x) & \text{if $x \in \set{X}^\ast$,}\\
    0 & \text{otherwise,}
  \end{cases}
\]
where $q$ is any distribution over $\set{X}^\ast$. Given this, the worst case
costs $C^\ast(x)$ are
\begin{equation}\label{eq:opt-costs-2}
  C^\ast(x) = \begin{cases}
     \frac{1}{\beta}\log\frac{q(x)}{\beta p_0(x)}
      & \text{if $x\in\set{X}^\ast$,}\\
     -\infty & \text{otherwise.}
  \end{cases}
\end{equation}
However, if $\set{X}^\ast \neq \set{X}$, then we get a contradiction, since
\[
  U(x^\ast) - C^\ast(x^\ast) \not\geq U(x) - C^\ast(x)
\]
for all $x \notin \set{X}^\ast$, violating~\eqref{eq:maximal-set}. Hence, it
must be that for all $x \in \set{X}$,
\[
  U(x)-C(x) = \text{constant,}
\]
concluding the proof. 
\end{proof}

To get a better understanding on the meaning of this result, it is helpful to consider an
example. Figure~\ref{fig:minimax} illustrates four choices of policies and the
corresponding optimal adversarial costs. Here it is seen that an agent
can protect itself by spreading the probability mass of its policy over many
realizations.

\begin{figure}
\centering
\includegraphics[width=\columnwidth]{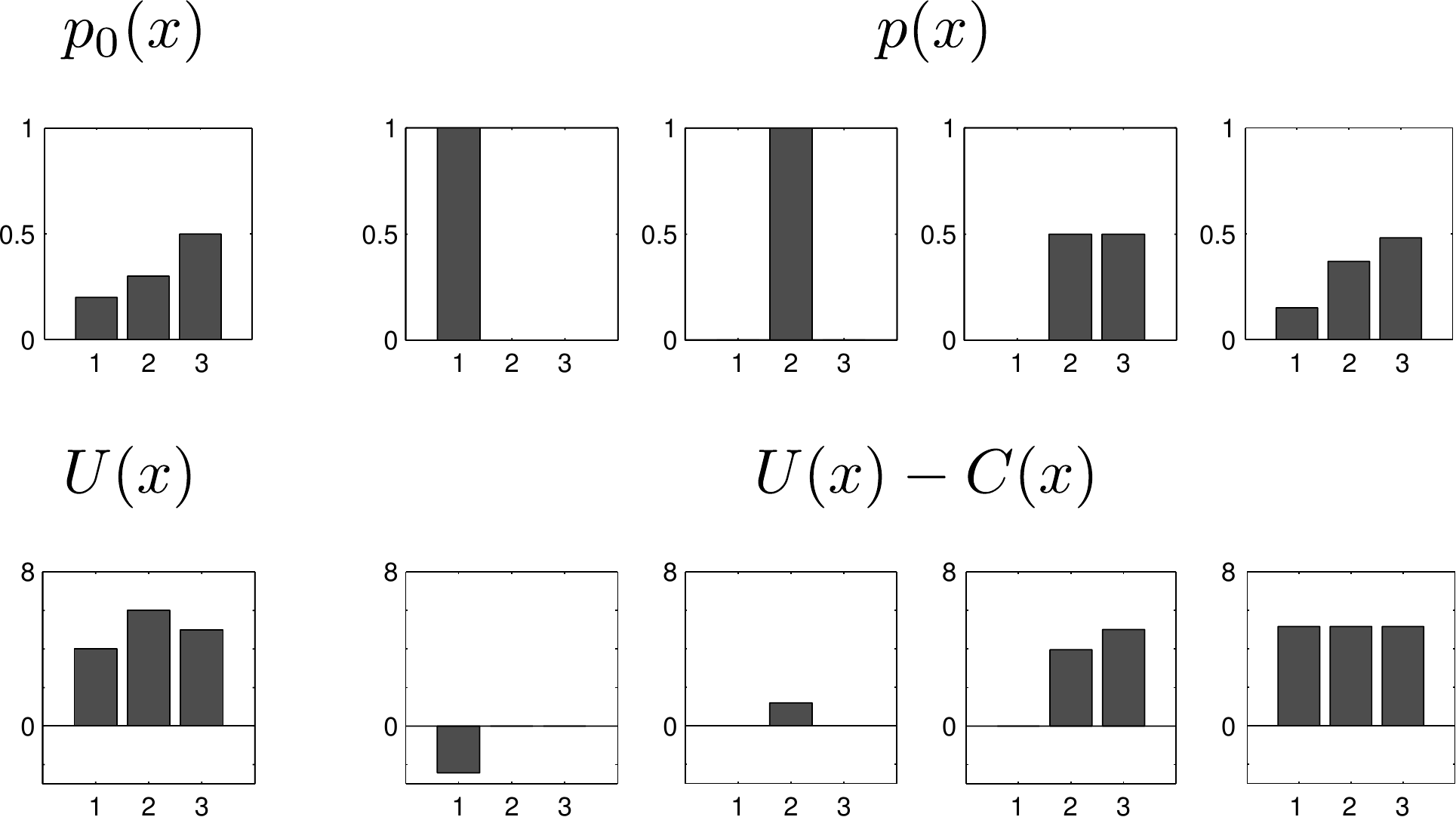}
\caption{Given an agent's policy $p$, the environment chooses costs that change
the net utilities $U(x)-C(x)$ of the realizations with support. The agent
can protect itself from adversarial costs by randomization.
The last column corresponds to the agent's optimal policy.}\label{fig:minimax}
\end{figure}

\section{Discussion}\label{sec:discussion}

Starting from the free energy functional, we have shown
how to construct an alternative adversarial interpretation that constitutes an
equivalent problem. Conceptually, our findings can be summarized as follows:
\begin{enumerate}
\item A regularization of the expected utility encodes assumpions about
deviations from the expected utility. 
\item The Legendre-Fenchel transformation reinterprets the free energy
as a game against an environmental adversary.
\item In this transformation, it is seen that the regularization is equivalent
to the penalization of adversarial costs.
\item Stochastic policies guard against adversarial costs. Expected utility
alone is linear in the policy and thus encodes deterministic optimal policies.
\end{enumerate}

\subsection{Indifference and Nash Equilibrium}
Our results establish an interesting relation to
game theory. Theorem~\ref{theo:indifference}
and~\eqref{eq:optimal-cost} immediately imply
\[
  U(x)-C^\ast(x) =  U(x) - \frac{1}{\beta} \log \frac{p(x)}{\beta p_0(x)} =
\text{constant}
\]
for all $x \in \set{X}$. This turns out to be an known characterization of
the equilibrium distribution \eqref{eq:equilibrium}. However, in light of
our present results, it acquires a different twist; it fits with the
well-known result that a Nash equilibrium is a strategy profile such that each
player chooses a (mixed) strategy that renders the others players indifferent to
their choices \cite{Osborne1999}.

\subsection{Other Regularizers}

The presented method is general and can also be applied to other regularizers
in order to make the assumptions about the imaginary adversary explicit.
For instance, the following list enumerates some examples.
\begin{enumerate}
\item \emph{Expected Utility:} Because the regularization term is null, the
resulting adversary's objective function is
\[
  \max_p \min_C \sum_x p(x) [U(x)-C(x)] - \delta^{C(x)}_0.
\]
Here we see that a null regularization implies an adversary devoid of power,
i.e.\ one that cannot alter the utilities chosen by the agent.
\item \emph{Power Function:} If the regularizer is a power function, then we
have the dual power function
\[
  \max_p \min_C \sum_x p(x) [U(x)-C(x)] + |C(x)|^\alpha.
\]
This case is interesting because it encompases ridge and lasso as special cases.
\item \emph{Modern Portfolio Theory:} One notable case that is widely used in
practice is Modern Portfolio Theory. Here, an investor trades off asset
returns versus portfolio risk, encoded into a regularization term that is
quadratic in the policy \cite{Markowitz1952}. The transformed objective
function is given by
\[
  \max_p \min_C \sum_x p(x) [U(x)-C(x)] + \frac{1}{2}\lambda C^T \Sigma C.
\]
\end{enumerate}

\subsection{Conclusions}

Our central result shows how to extract the adversarial cost function
implicitly assumed in the agent's objective function by means
of an appropriately chosen Legendre transformation. Conversely, one
can also start from the cost function of an adversarial environment
and reexpress it as a regularized optimization. While our main
motivation was to apply this to the free energy functional, the
transformation is general and works also in other well-known
frameworks such as in modern portfolio theory.
The immediate application is that we can switch between the two representations
and pick the one that is easier to solve. This suggests novel algorithms for
solving the single-agent planning problem based on ideas from differential game
theory \cite{Dockner2001} and convex programming~\cite{Zinkevich2003}.
Furthermore, this also suggests that agents that randomize their decisions, are
essentially expressing their information limitations by protecting themselves
from undesired outcomes.  As such, we believe this is a basic result that opens
up a series of additional questions regarding the nature of stochastic policies,
such as the conditions under which they arise and how they encode risk
sensitivity, which need to be further explored in the future.


\newpage

\subsection*{Acknowledgements}
We thank the anonymous reviewers for their valuable comments and suggestions for
improving this manuscript. This study was funded by grants from the
U.S. National Science Foundation, Office of Naval Research and
Department of Transportation.



\begin{thebibliography}{}

\bibitem[\protect\citeauthoryear{Boyd and Vandenberghe}{2004}]{Boyd2004}
Boyd, S., and Vandenberghe, L.
\newblock 2004.
\newblock {\em {Convex Optimization}}.
\newblock Cambridge Univeristy Press.

\bibitem[\protect\citeauthoryear{Browne \bgroup et al\mbox.\egroup
  }{2012}]{Browne2012}
Browne, C.; Powley, E.; Whitehouse, D.; Lucas, S.; Cowling, P.; Rohlfshagen,
  P.; Travener, S.; Perez, D.; Samothrakis, S.; and Colton, S.
\newblock 2012.
\newblock {A Survey of Monte Carlo Tree Search Methods}.
\newblock {\em IEEE Transactions on Computational Intelligence and AI in Games}
  4(1).

\bibitem[\protect\citeauthoryear{Coulom}{2006}]{Coulom2006}
Coulom, R.
\newblock 2006.
\newblock {Efficient Selectivity and Backup Operators in Monte-Carlo Tree
  Search}.
\newblock In {\em {Computer and Games}}.

\bibitem[\protect\citeauthoryear{Dockner \bgroup et al\mbox.\egroup
  }{2001}]{Dockner2001}
Dockner, E.; Jorgensen, S.; Long, N.; and Sorger, G.
\newblock 2001.
\newblock {\em {Differential Games in Economics and Management Science}}.
\newblock Cambridge University Press.

\bibitem[\protect\citeauthoryear{Fleming}{1977}]{Fleming1977}
Fleming, W.
\newblock 1977.
\newblock {Exit Probabilities and Optimal Stochastic Control}.
\newblock {\em Applied Mathematics and Optimization} 4:329--346.

\bibitem[\protect\citeauthoryear{Friston}{2009}]{Friston2009}
Friston, K.
\newblock 2009.
\newblock {The free-energy principle: a rough guide to the brain?}
\newblock {\em Trends in Cognitive Science} 13:293--301.

\bibitem[\protect\citeauthoryear{Gelly \bgroup et al\mbox.\egroup
  }{2006}]{Gelly2006}
Gelly, S.; Wang, Y.; Munos, R.; and Teytaud, O.
\newblock 2006.
\newblock {Modification of UCT with Patterns in Monte-Carlo Go}.
\newblock Technical report, Inst. Nat. Rech. Inform. Auto. (INRIA).

\bibitem[\protect\citeauthoryear{Hansen and Sargent}{2008}]{Hansen2008}
Hansen, L., and Sargent, T.
\newblock 2008.
\newblock {\em {Robustness}}.
\newblock Princeton: Princeton University Press.

\bibitem[\protect\citeauthoryear{Hutter}{2004}]{Hutter2004}
Hutter, M.
\newblock 2004.
\newblock {\em {Universal Artificial Intelligence: Sequential Decisions based
  on Algorithmic Probability}}.
\newblock Berlin: Springer.

\bibitem[\protect\citeauthoryear{Kappen, G{\'o}mez, and
  Opper}{2012}]{Kappen2012}
Kappen, H.; G{\'o}mez, V.; and Opper, M.
\newblock 2012.
\newblock {Optimal control as a graphical model inference problem}.
\newblock {\em Machine Learning} 1:1--11.

\bibitem[\protect\citeauthoryear{Kappen}{2005a}]{Kappen2005a}
Kappen, H.
\newblock 2005a.
\newblock {A linear theory for control of non-linear stochastic systems}.
\newblock {\em Physical Review Letters} 95:200201.

\bibitem[\protect\citeauthoryear{Kappen}{2005b}]{Kappen2005b}
Kappen, H.
\newblock 2005b.
\newblock {Path integrals and symmetry breaking for optimal control theory}.
\newblock {\em Journal of Statistical Mechanics: Theory and Experiment}.

\bibitem[\protect\citeauthoryear{Legg}{2008}]{Legg2008}
Legg, S.
\newblock 2008.
\newblock {\em {Machine Super Intelligence}}.
\newblock Ph.D. Dissertation, Department of Informatics, University of Lugano.

\bibitem[\protect\citeauthoryear{Markowitz}{1952}]{Markowitz1952}
Markowitz, H.
\newblock 1952.
\newblock {Portfolio Selection}.
\newblock {\em The Journal of Finance} 7:77--�91.

\bibitem[\protect\citeauthoryear{Mnih \bgroup et al\mbox.\egroup
  }{2013}]{Mnih2013}
Mnih, V.; Kavukcuoglu, K.; Silver, D.; Graves, A.; Antonoglou, I.; Wierstra,
  D.; and Riedmiller, M.
\newblock 2013.
\newblock {Playing Atari with Deep Reinforcement Learning}.
\newblock {\em ArXiv} (1312.5602).

\bibitem[\protect\citeauthoryear{Ortega and Braun}{2012}]{OrtegaBraun2012}
Ortega, P., and Braun, D.
\newblock 2012.
\newblock {Free Energy and the Generalized Optimality Equations for Sequential
  Decision Making}.
\newblock In {\em {European Workshop on Reinforcement Learning (EWRL'10)}}.

\bibitem[\protect\citeauthoryear{Ortega and Braun}{2013}]{OrtegaBraun2013}
Ortega, P.~A., and Braun, D.~A.
\newblock 2013.
\newblock {Thermodynamics as a Theory of Decision-Making with Information
  Processing Costs}.
\newblock {\em Proceedings of the Royal Society A 20120683}.

\bibitem[\protect\citeauthoryear{Ortega, Braun, and Tishby}{2014}]{Ortega2014a}
Ortega, P.; Braun, D.; and Tishby, N.
\newblock 2014.
\newblock {Monte Carlo Methods for Exact \& Efficient Solution of the
  Generalized Optimality Equations}.
\newblock In {\em {IEEE International Conference on Robotics and Automation
  (ICRA)}}.

\bibitem[\protect\citeauthoryear{Osborne and Rubinstein}{1999}]{Osborne1999}
Osborne, M., and Rubinstein, A.
\newblock 1999.
\newblock {\em {A Course in Game Theory}}.
\newblock {MIT} Press.

\bibitem[\protect\citeauthoryear{Papadimitriou and
  Tsitsiklis}{1987}]{PapadimitriouTsitsiklis1987}
Papadimitriou, C., and Tsitsiklis, J.
\newblock 1987.
\newblock {The Complexity of Markov Decision Processes}.
\newblock {\em Mathematics of Operations Research} 12(3):441--450.

\bibitem[\protect\citeauthoryear{Peters, M{\"u}lling, and
  Alt{\"u}n}{2010}]{Peters2010}
Peters, J.; M{\"u}lling, K.; and Alt{\"u}n, Y.
\newblock 2010.
\newblock {Relative entropy policy search}.
\newblock In {\em {AAAI}}.

\bibitem[\protect\citeauthoryear{Rubinstein}{1998}]{Rubinstein1998}
Rubinstein, A.
\newblock 1998.
\newblock {\em {Modeling Bounded Rationality}}.
\newblock Cambridge, MA: {MIT} Press.

\bibitem[\protect\citeauthoryear{Russell and Norvig}{2010}]{RussellNorvig2010}
Russell, S., and Norvig, P.
\newblock 2010.
\newblock {\em {Artificial Intelligence: A Modern Approach}}.
\newblock Prentice-Hall, Englewood Cliffs, NJ, 3rd edition edition.

\bibitem[\protect\citeauthoryear{Russell}{1995}]{Russell1995a}
Russell, S.
\newblock 1995.
\newblock {Rationality and Intelligence}.
\newblock In Mellish, C., ed., {\em {Proceedings of the Fourteenth
  International Joint Conference on Artificial Intelligence}},  950--957.
\newblock San Francisco: Morgan Kaufmann.

\bibitem[\protect\citeauthoryear{Savage}{1954}]{Savage1954}
Savage, L.
\newblock 1954.
\newblock {\em {The Foundations of Statistics}}.
\newblock New York: John Wiley and Sons.

\bibitem[\protect\citeauthoryear{Simon}{1972}]{Simon1972}
Simon, H.
\newblock 1972.
\newblock {Theories of Bounded Rationality}.
\newblock In Radner, C., and Radner, R., eds., {\em {Decision and
  Organization}}. Amsterdam: North Holland Publ.
\newblock  161--176.

\bibitem[\protect\citeauthoryear{Strens}{2000}]{Strens2000}
Strens, M.
\newblock 2000.
\newblock {A Bayesian Framework for Reinforcement Learning}.
\newblock In {\em {ICML}}.

\bibitem[\protect\citeauthoryear{Theodorou, Buchli, and
  Schaal}{2010}]{Theodorou2010}
Theodorou, E.; Buchli, J.; and Schaal, S.
\newblock 2010.
\newblock {A generalized path integral approach to reinforcement learning}.
\newblock {\em Journal of Machine Learning Research} 11:3137--3181.

\bibitem[\protect\citeauthoryear{Tishby and Polani}{2011}]{Tishby2011}
Tishby, N., and Polani, D.
\newblock 2011.
\newblock {\em {Perception-Action Cycle}}.
\newblock Springer New York.
\newblock chapter Information Theory of Decisions and Actions,  601--636.

\bibitem[\protect\citeauthoryear{Todorov}{2006}]{Todorov2006}
Todorov, E.
\newblock 2006.
\newblock {Linearly solvable Markov decision problems}.
\newblock In {\em {Advances in Neural Information Processing Systems}},
  volume~19,  1369--1376.

\bibitem[\protect\citeauthoryear{Touchette}{2005}]{Touchette2005}
Touchette, H.
\newblock 2005.
\newblock {Legendre-Fenchel Transforms in a Nutshell}.
\newblock Technical report, Rockefeller University.

\bibitem[\protect\citeauthoryear{van~den Broek, Wiegerinck, and
  Kappen}{2010}]{Broek2010}
van~den Broek, B.; Wiegerinck, W.; and Kappen, H.
\newblock 2010.
\newblock {Risk Sensitive Path Integral Control}.
\newblock In {\em {UAI}},  615--622.

\bibitem[\protect\citeauthoryear{Veness \bgroup et al\mbox.\egroup
  }{2011}]{Veness2011}
Veness, J.; Ng, M.; Hutter, M.; Uther, W.; and Silver, D.
\newblock 2011.
\newblock {A Monte-Carlo AIXI Approximation}.
\newblock {\em Journal of Artificial Intelligence Research} 40:95--142.

\bibitem[\protect\citeauthoryear{{Von Neumann} and
  Morgenstern}{1944}]{Neumann1944}
{Von Neumann}, J., and Morgenstern, O.
\newblock 1944.
\newblock {\em {Theory of Games and Economic Behavior}}.
\newblock Princeton: Princeton University Press.

\bibitem[\protect\citeauthoryear{Wolpert}{2004}]{Wolpert2004}
Wolpert, D.
\newblock 2004.
\newblock {\em {Complex Engineering Systems}}.
\newblock Perseus Books.
\newblock chapter Information theory - the bridge connecting bounded rational
  game theory and statistical physics.

\bibitem[\protect\citeauthoryear{Zinkevich}{2003}]{Zinkevich2003}
Zinkevich, M.
\newblock 2003.
\newblock {Online Convex Programming and Generalized Infinitesimal Gradient
  Ascent}.
\newblock In {\em {ICML}},  928--936.

\end{thebibliography}
\bibliographystyle{aaai}

\end{document}